\theoremstyle{definition}
\newtheorem{definition}{Definition}[section]
\newtheorem{example}{Example}[section]
\newtheorem{remark}{Remark}[section]
\newtheorem{proposition}{Proposition}[section]
\newcommand{\field}[1]{\ensuremath{\mathbb{#1}}}
\newcommand{\sets}[1]{\ensuremath{\mathcal{#1}}}
\newcommand{\reals}{\ensuremath{\field{R}}} 
\newcommand{\I}[1]{\ensuremath{\mathcal{I}\left(#1\right)}} 
\newcommand{\PR}{\ensuremath{\mathsf{P}}} 
\newcommand{\E}{\ensuremath{\mathsf{E}}} 
\newcommand{\uprx}{{\bf x}_{\overline{\rm p}}}
\newcommand{\prx}{{\bf x}_{\rm p}}
\newcommand{\uprxi}{{\bf{x}}_{\overline{\rm{p}},i}}
\newcommand{\prxi}{{\bf{x}}_{{\rm{p}},i}}
\newcommand{\uprxj}{{\bf{x}}_{\overline{\rm{p}},j}}
\newcommand{\prX}{{\mathcal X}_{\rm p}}
\newcommand{\uprX}{{\mathcal X}_{\overline{\rm p}}}
\DeclareMathOperator*{\argmin}{\mathrm{argmin}}
\begin{document}
 
%
\title{Learning Optimal and Fair Decision Trees for \\ Non-Discriminative Decision-Making}
\author{Sina Aghaei, Mohammad Javad Azizi, Phebe Vayanos\\
CAIS Center for Artificial Intelligence in Society\\
  University of Southern California, Los Angeles, CA 90007 \\
  \{saghaei,azizim,phebe.vayanos\}@usc.edu
}
\maketitle
\begin{abstract}
In recent years, automated data-driven decision-making systems have enjoyed a tremendous success in a variety of fields (e.g., to make product recommendations, or to guide the production of entertainment). More recently, these algorithms are increasingly being used to assist socially sensitive decision-making (e.g., to decide who to admit into a degree program or to prioritize individuals for public housing). Yet, these automated tools may result in discriminative decision-making in the sense that they may treat individuals unfairly or unequally based on membership to a category or a minority, resulting in disparate treatment or disparate impact and violating both moral and ethical standards. This may happen when the training dataset is itself biased (e.g., if individuals belonging to a particular group have historically been discriminated upon). However, it may also happen when the training dataset is unbiased, if the errors made by the system affect individuals belonging to a category or minority differently (e.g., if misclassification rates for Blacks are higher than for Whites). In this paper, we unify the definitions of unfairness across classification and regression. We propose a versatile mixed-integer optimization framework for learning optimal and fair decision trees and variants thereof to prevent disparate treatment and/or disparate impact as appropriate. This translates to a flexible schema for designing fair and interpretable policies suitable for socially sensitive decision-making. We conduct extensive computational studies that show that our framework improves the state-of-the-art in the field (which typically relies on heuristics) to yield non-discriminative decisions at lower cost to overall accuracy.
\end{abstract}

\section{Introduction}
\label{sec:introduction}

Discrimination refers to the unfair, unequal, or prejudicial treatment of an individual or group based on certain characteristics, often referred to as \emph{protected} or \emph{sensitive}, including age, disability, ethnicity, gender, marital status, national origin, race, religion, and sexual orientation. Most philosophical, political, and legal discussions around discrimination assume that discrimination is morally and ethically wrong and thus undesirable in our societies~\cite{sep-discrimination}. 

Broadly speaking, one can distinguish between two types of discrimination: \emph{disparate treatment} (aka \emph{direct discrimination}) and \emph{disparate impact} (aka \emph{indirect discrimination}). Disparate treatment consists of rules \emph{explicitly} imposing different treatment to individuals that are similarly situated and that only differ in their protected characteristics. Disparate impact on the other hand does not explicitly use sensitive attributes to decide treatment but \emph{implicitly}  results in systematic different handling of individuals from protected groups.

In recent years, machine learning (ML) techniques, in particular supervised learning approaches such as classification and regression, routinely assist or even replace human decision-making. For example, they have been used to make product recommendations~\cite{Finley2016} and to guide the production of entertainment content~\cite{Kumar2018netflix}. More recently, such algorithms are increasingly being used to also assist socially sensitive decision-making. For example, they can help inform the decision to give access to credit, benefits, or public services~\cite{Byrnes2016artificialintollerance}, they can help support criminal sentencing decisions~\cite{Rudin2013policing}, and assist screening decisions for jobs/college admissions~\cite{Miller2015algorithmhire}.

Yet, these automated data-driven tools may result in discriminative decision-making, causing \emph{disparate treatment} and/or \emph{disparate impact} and violating moral and ethical standards. First, this may happen when the training dataset is biased so that the ``ground truth'' is not available. Consider for example the case of a dataset wherein individuals belonging to a particular group have historically been discriminated upon (e.g., the dataset of a company in which female employees are never promoted although they perform equally well to their male counterparts who are, on the contrary, advancing their careers; in this case, the true merit of female employees --the ground truth-- is not observable). Then, the machine learning algorithm will likely uncover this bias (effectively encoding endemic prejudices) and yield discriminative decisions (e.g., recommend male hires). Second, machine learning algorithms may yield discriminative decisions even when the training dataset is unbiased (i.e., even if the ``ground truth'' is available). This is the case if the errors made by the system affect individuals belonging to a category or minority differently. Consider for example a classification algorithm for breast cancer detection that has far higher false negative rates for Blacks than for Whites (i.e., it fails to detect breast cancer more often for Blacks than for Whites). If used for decision-making, this algorithm would wrongfully recommend no treatment for more Blacks than Whites, resulting in racial unfairness. In the literature, there have been a lot of reports of algorithms resulting in unfair treatment, e.g., in racial profiling and redlining \cite{squires2003insuranceprofiling}, mortgage discrimination \cite{LaCOUR-LITTLE1999}, personnel selection \cite{Stoll2004blackjobapplicants}, and employment \cite{Kuhn1990sexdiscriminationlabor}. Note that a ``naive'' approach that rids the dataset from sensitive attributes does not necessarily result in fairness since unprotected attributes may be correlated with protected ones.

In this paper, we are motivated from the problem of using ML for decision- or policy-making in settings that are socially sensitive (e.g., education, employment, housing) given a labeled training dataset containing one (or more) protected attribute(s). The main desiderata for such a data-driven decision-support tool are: (1) Maximize predictive accuracy: this will ensure that e.g., scarce resources (e.g., jobs, houses, loans) are allocated as efficiently as possible, that innocent (guilty) individuals are not wrongfully incarcerated (released); (2) Ensure fairness: in socially sensitive settings, it is desirable for decision-support tools to abide by ethical and moral standards to guarantee absence of disparate treatment and/or impact; (3) Applicable to both classification and regression tasks: indeed, disparate treatment and disparate impact may occur whether the quantity used to drive decision-making is categorical and unordered or continuous/discrete and ordered; (4) Applicable to both biased and unbiased datasets: since unfairness may get encoded in machine learning algorithms whether the ground truth is or not available, our tool must be able to enforce fairness in either setting; (5) Customize interpretability: in socially sensitive settings, decision-makers can often decide to comply or not with the recommendations of the automated decision-support tool; recommendations made by interpretable systems are more likely to be adhered to; moreover,  since interpretability is subjective, it is desirable that the decision-maker be able to customize the structure of the model. Next, we summarize the state-of-the-art in related work and highlight the need for a unifying framework that addresses these desiderata.

\subsection{Related Work}

\textbf{Fairness in Machine Learning.} The first line of research in this domain focuses on \emph{identifying} discrimination in the data~\cite{Pedreshi:2008} or in the model \cite{Adler:2018:ABM:3182400.3182507}. The second stream of research focuses on \emph{preventing} discrimination and can be divided into three parts. First, pre-processing approaches, which rely on modifying the data to eliminate or neutralize any preexisting bias and subsequently apply standard ML techniques~\cite{Kamiran2012,Kamiran2009classnodiscr,Luong:2011}. We emphasize that preprocessing approaches cannot be employed to eliminate bias arising from the algorithm itself. Second, post-processing approaches, which a-posteriori adjust the predictors learned using standard ML techniques to improve their fairness properties~\cite{Hardt:2016,Fish:2016}. The third type of approach, which most closely relates to our work, is an in-processing one. It consists in adding a fairness regularizer to the loss function objective, which serves to penalize discrimination, mitigating disparate treatment~\cite{Dwork:2012:FTA:2090236.2090255,pmlr-v28-zemel13,Berk:2017} or disparate impact~\cite{Calders:2010,Kamiran:2010}. Our approach most closely relates to the work in~\cite{Kamiran:2010}, where the authors propose a \emph{heuristic} algorithm for learning fair decision-trees for \emph{classification}. They use the non-discrimination constraint to design a new splitting criterion and pruning strategy. In our work, we propose in contrast an \emph{exact} approach for designing very general classes of fair decision-trees that is applicable to both \emph{classification and regression} tasks. \\ 
\textbf{Mixed-Integer Optimization for Machine Learning.} Our paper also relates to a nascent stream of research that leverages mixed-integer programming (MIP) to address ML tasks for which heuristics were traditionally employed \cite{Bertsimas:2015,Lou:2013,Mazumder:2015,Bertsimas2017,verwer2017}. Our work most closely relates to the work in~\cite{Bertsimas2017} which designs optimal \emph{classification} trees using MIP, yielding average absolute improvements in out-of-sample accuracy over the state-of-art CART algorithm~\cite{breiman1984classification} in the range 1–5\%. It also closely relates to the work in~\cite{verwer2017} which introduces optimal \emph{decision trees} and showcases how 
discrimination aware decision trees can be designed using MIP. Lastly, our framework relates to the approach in~\cite{Azizi2018_CPAIOR} where an MIP is proposed to design dynamic decision-tree-based resource allocation policies. Our approach moves a significant step ahead of~\cite{Bertsimas2017}, \cite{verwer2017}, and~\cite{Azizi2018_CPAIOR} in that we introduce a unifying framework for designing fair decision trees and showcase how different fairness metrics (quantifying disparate treatment and disparate impact) can be explicitly incorporated in an MIP model to support fair and interpretable decision-making that relies on either categorical or continuous/ordered variables. Our approach thus enables the generalization of these MIP based models to {general decision-making tasks} in \emph{socially sensitive settings} with \emph{diverse fairness requirements}. Compared to the regression trees introduced in \cite{verwer2017}, we consider more flexible decision tree models which allow for linear scoring rules to be used at each branch and at each leaf -- we term these ``linear branching'' and ``linear leafing'' rules in the spirit of \cite{Azizi2018_CPAIOR}. Compared to \cite{Bertsimas2017,verwer2017} which require one hot encoding of categorical features, we treat branching on categorical features explicitly yielding a more interpretable and flexible tree. \\ 
\textbf{Interpretable Machine Learning.} Finally our work relates to interpretable ML, including works on decision rules \cite{wangBayesian2017,letham2015interpretable}, decision sets \cite{lakkaraju2016interpretable}, and generalized additive models \cite{lou2013accurate}. In this paper, we build on decision trees~\cite{breiman1984classification} which have been used to generate interpretable models in many settings \cite{valdes2016mediboost,huang2007iptree,che2016interpretable}. Compared to this literature, we introduce two new model classes which generalize decision trees to allow more flexible branching structures (linear branching rules) and the use of a linear scoring rule at each leaf of the tree (linear leafing). An approximate algorithm for designing classification trees with linear leafing rules was originally proposed in~\cite{Frank1998}. In contrast, we propose to use linear leafing for regression trees. Our approach is thus capable of integrating linear branching and linear leafing rules in the design of fair regression trees. It can also integrate linear branching in the design of fair classification trees. Compared to the literature on interpretable ML, we use these models to yield general interpretable and fair automated decision- or policy-making systems rather than learning systems. By leveraging MIP technology, our approach can impose very general interpretability requirements on the structure of the decision-tree and associated decision-support system (e.g., limited number of times that a feature is branched on). This flexibility make it particularly well suited for socially sensitive settings.

\subsection{Proposed Approach and Contributions}

Our main contributions are:
\begin{enumerate}[(1)]
\item We formalize the two types of discrimination (disparate treatment and disparate impact) mathematically for both classification and regression tasks. We define associated indices that enable us to quantify disparate treatment and disparate impact in classification and regression datasets.
\item We propose a unifying MIP framework for designing \emph{optimal} and \emph{fair} decision-trees for \emph{classification} and \emph{regression}. The trade-off between accuracy and fairness is conveniently tuned by a single, user selected parameter. 
\item Our approach is the first in the literature capable of designing fair regression trees able to mitigate both types of discrimination (disparate impact and/or disparate treatment) thus making significant contributions to the literature on fair machine learning. 

\item  Our approach also contributes to the literature on (general) machine learning since it generalizes the decision-tree-based approaches for classification and regression (e.g., CART) to more general branching and leafing rules incorporating also interpretability constraints.

\item  Our framework leverages MIP technology to allow the decision-maker to conveniently tune the interpretability of the decision-tree by selecting: the structure of the tree (e.g., depth), the type of branching rule (e.g., score based branching or single feature), the type of model at each leaf (e.g., linear or constant). This translates to customizable and interpretable decision-support systems that are particularly attractive in socially sensitive settings.
\item We conduct extensive computational studies showing that our framework improves the state-of-the-art to yield non-discriminating decisions at lower cost to overall accuracy.
\end{enumerate}

\section{A Unifying Framework for Fairness in Classification and Regression}
\label{sec:fairness}

In supervised learning, the goal is to learn a mapping $f_{\bf \theta}:\mathbb R^d \rightarrow \mathbb R$, parameterized by ${\bf \theta} \in \Theta \subset \mathbb R^n$, that maps feature vectors ${\bf x} \in \mathcal X \subseteq \mathbb R^d$ to labels $y \in \mathcal Y \subseteq \mathbb R$. We let $\PR$ denote the joint distribution over $\mathcal X \times \mathcal Y$ and let $\E(\cdot)$ the expectation operator relative to $\PR$. If labels are \emph{categorical and unordered} and $|\mathcal Y| < \infty $, we refer to the task as a \emph{classification} task. In two-class (binary) classification for example, we have $\mathcal Y:=\{-1,1\}$. On the other hand if labels are \emph{continuous or ordered discrete values} (typically normalized so that $\mathcal Y \subseteq [-1,1]$), then the task is a \emph{regression} task. \color{black} Learning tasks are typically achieved by utilizing a training set $\mathcal T:=\{ {\bf x}_i, y_i \}_{i\in \mathcal N}$ consisting of historical realizations of ${\bf x}$ and $y$. The parameters of the classifier are then estimated as those that minimize a certain loss function $L$ over the training set $\mathcal T$, i.e., $\theta^\star \in \argmin_{\theta \in \Theta} L({\bf \theta},\mathcal T)$.

In supervised learning for decision-making, the learned mapping $f_{\theta^\star}$ is used to guide human decision-making, e.g., to help decide whether an individual with feature vector ${\bf x}$ should be granted bail (the answer being ``yes'' if the model predicts he will not commit a crime). In socially sensitive supervised learning, it is assumed that some of the elements of the feature vector ${\bf x}$ are sensitive. We denote the subvector of ${\bf x}$ that collects all protected (resp.\ unprotected) attributes by $\prx$ with support $\prX$ (resp.\ $\uprx$ with support $\uprX$). In addition to the standard classification task, the goal here is for the resulting mapping to be non-discriminative in the sense that it should not result in disparate treatment and/or disparate impact relative to some (or all) of the protected features. In what follows, we formalize mathematically the notions of unfairness and propose associated indices that serve to measure and also prevent (see Section~\ref{sec:MIP-formulation}) discrimination.


\subsection{Disparate Impact}

Disparate impact does not explicitly use sensitive attributes to decide treatment but \emph{implicitly} results in \emph{systematic} different handling of individuals from protected groups. Next, we introduce the mathematical definition of disparate impact in classification, also discussed in~\cite{bilalzafar2017fairness,barocas2016big}.

\begin{definition}[Disparate Impact in Classification] Consider a classifier that maps feature vectors ${\bf x} \in \mathbb R^d$, with associated protected part $\prx \in \prX$, to labels $y \in \mathcal Y$. We will say that the decision-making process does not suffer from disparate impact if the probability that it outputs a specific value $y$ does not change after observing the protected feature(s) $\prx$, i.e.,
\begin{equation}
\PR( y |  \prx ) = \PR( y ) \quad \text{ for all } y \in \mathcal Y \text { and } \prx \in \prX.
\label{eq:disparate_impact_classification}
\end{equation}
\label{def:disparate_impact_classification}
\end{definition}
The following metric enables us to quantify disparate impact in a dataset with categorical or unordered labels.
\begin{definition}[DIDI in Classification] Given a classification dataset $\mathcal D:= \{ {\bf x}_i, y_i \}_{i\in \mathcal N}$, we define its Disparate Impact Discrimination Index by
\begin{equation*}
\begin{array}{ccl}
{\mathsf{DIDI}}_{\rm c}({\mathcal D}) & = & \displaystyle \sum_{y \in \mathcal Y} \sum_{\prx \in \prX} \left| 
\frac{ \left| \{ i \in \mathcal N : y_i=y   \} \right| }{ | \mathcal N | } 
\right. \\ 
&& \left. \;\; -
\displaystyle \frac{ \left| \{ i \in \mathcal N : y_i=y \; \cap \; \prxi = \prx   \} \right| }{ \left| \{ i \in \mathcal N : \prxi = \prx \right| } 
\right|.
\end{array}
\end{equation*}

The higher ${\mathsf{DIDI}}_{\rm c}({\mathcal D})$, the more the dataset suffers from disparate impact. If ${\mathsf{DIDI}}_{\rm c}({\mathcal D}) = 0$, we will say that the dataset does not suffer from disparate impact.
\label{def:DIDIc}
\end{definition}

The following proposition shows that if a dataset is unbiased, then it is sufficient for the ML to be unbiased in its errors to yield an unbiased decision-support system.
\begin{proposition}
Consider an (unknown) class-based decision process (a classifier) that maps feature vectors ${\bf x}$ to class labels $y\in \mathcal Y$ and suppose this classifier does not suffer from disparate impact, i.e., $\PR(  y |  \prx ) = \PR( y )$ for all $y \in \mathcal Y$ and $\prx \in \prX$.
Consider learning (estimating) this classifier using a classifier whose output $\tilde y \in \mathcal Y$ is such that the probability of misclassifying a certain value $y$ as $\tilde y$ does not change after observing the protected feature(s), i.e.,
\begin{equation}
\PR( \tilde y |  y, \prx ) = \PR( \tilde y |  y) \text{ for all } \tilde y, \; y \in \mathcal Y \text { and } \prx \in \prX.
\label{eq:disparate_mistreatment}
\end{equation}
Then, the learned classifier will not suffer from disparate impact, i.e., 
$
\PR( \tilde y |  \prx ) = \PR( \tilde y )$ for all $\tilde y,\prx.
$
\label{prop:relation_to_disparate_mistreatment}
\end{proposition}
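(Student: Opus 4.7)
The plan is to derive the conclusion directly by marginalizing $\PR(\tilde y \mid \prx)$ over the (unobserved) true label $y$ and then applying the two hypotheses in turn. By the law of total probability, for any $\tilde y \in \mathcal Y$ and any $\prx \in \prX$,
\begin{equation*}
\PR(\tilde y \mid \prx) \;=\; \sum_{y \in \mathcal Y} \PR(\tilde y \mid y, \prx)\, \PR(y \mid \prx).
\end{equation*}
This identity is the whole engine of the proof; everything else is substitution.

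Next, I would apply the hypothesis \eqref{eq:disparate_mistreatment} on the learned classifier to replace $\PR(\tilde y \mid y, \prx)$ by $\PR(\tilde y \mid y)$, and the hypothesis that the underlying true classifier does not suffer from disparate impact to replace $\PR(y \mid \prx)$ by $\PR(y)$. Substituting both into the display above gives
\begin{equation*}
\PR(\tilde y \mid \prx) \;=\; \sum_{y \in \mathcal Y} \PR(\tilde y \mid y)\, \PR(y) \;=\; \PR(\tilde y),
\end{equation*}
where the last equality is again the law of total probability, this time applied to marginalize $y$ out of $\PR(\tilde y, y)$. Since $\tilde y$ and $\prx$ were arbitrary, this shows $\PR(\tilde y \mid \prx) = \PR(\tilde y)$ for all $\tilde y \in \mathcal Y$ and $\prx \in \prX$, which is exactly Definition~\ref{def:disparate_impact_classification} applied to the learned classifier.

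There is essentially no hard step: the argument is a one-line conditional-independence calculation. The only subtlety worth flagging in the writeup is that the assumption \eqref{eq:disparate_mistreatment} must be read as conditional independence of $\tilde y$ and $\prx$ \emph{given} $y$ (i.e., the protected attribute influences the prediction only through the true label), and that one implicitly assumes $\PR(\prx) > 0$ so that the conditional probabilities are well defined. Given this, nothing beyond the two hypotheses and two applications of the law of total probability is required, so I would keep the proof to the three displayed lines above.
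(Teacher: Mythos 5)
Your proof is correct and is essentially the paper's own argument run in the opposite direction: the paper starts from $\PR(\tilde y)$, applies the law of total probability, and uses the two hypotheses to arrive at $\PR(\tilde y \mid \prx)$, while you start from $\PR(\tilde y \mid \prx)$ and substitute the same two hypotheses to arrive at $\PR(\tilde y)$. The decomposition and the use of the assumptions are identical, so this is the same proof.
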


\begin{proof}
Fix any $\tilde y \in \mathcal Y$ and $\prx \in \prX$. We have
\begin{align*}
\PR( \tilde y )  &= \textstyle \sum_{y \in \mathcal Y} \PR ( \tilde y | y ) \PR(y)
=  \sum_{y \in \mathcal Y} \PR ( \tilde y | y , \prx ) \PR(y | \prx )\\
&= \textstyle \sum_{y \in \mathcal Y} \PR ( \tilde y \cap y | \prx )
 = \PR( \tilde y |  \prx ).
\end{align*}
Since the choice of $\tilde y \in \mathcal Y$ and $\prx \in \prX$ was arbitrary, the claim follows.
\end{proof}
\begin{remark}
Proposition~\eqref{eq:disparate_impact_classification} implies that if we have a (large i.i.d.) classification dataset $\{ {\bf x}_i, y_i \}_{i\in \mathcal N}$ that does not suffer from disparate impact (see Definition~\ref{def:DIDIc}) and we use it to learn a mapping that maps ${\bf x}$ to $y$ and that has the property that the probability of misclassifying a certain value $y$ as $\hat y$ does not change after observing the protected feature(s) $\prx$, then the resulting classifier will not suffer from disparate impact. Classifiers with the Property~\eqref{eq:disparate_mistreatment} are sometimes said to not suffer from~\emph{disparate mistreatment}, see e.g., \cite{Bilal:2016}. We emphasize that only imposing~\eqref{eq:disparate_mistreatment} on a classifier may result in a decision-support system that is plagued by disparate impact if the dataset is discriminative.
\end{remark}

Next, we propose a mathematical definition of disparate impact in regression.

\begin{definition}[Disparate Impact in Regression] Consider a predictor that maps feature vectors ${\bf x} \in \mathbb R^d$, with associated protected part $\prx \in \prX$, to values $y \in \mathcal Y$. We will say that the predictor does not suffer from disparate impact if the expected value $y$ do not change after observing the protected feature(s) $\prx$, i.e.,
\begin{equation}
\E( y |  \prx ) = \E( y ) \quad \text{ for all } \prx \in \prX.
\label{eq:disparate_impact_regression}
\end{equation}
\label{def:disparate_impact_regression}
\end{definition}

\begin{remark}
Strictly speaking, Definition~\ref{def:disparate_impact_regression} should exactly parallel Definition~\ref{def:disparate_impact_classification}, i.e., the entire distributions should be equal rather than merely their expectations. However, requiring continuous distributions to be equal would yield computationally intractable models, which motivates us to require fairness in the first moment of the distribution only. 
\end{remark}

\begin{proposition}
Consider an (unknown) decision process that maps feature vectors ${\bf x}$ to values $y\in \mathcal Y$ and suppose this process does not suffer from disparate impact, i.e.,
$
\E(  y |  \prx ) = \E( y )$ for all $\prx \in \prX$.
Consider learning (estimating) this model using a learner whose output $\tilde y \in \mathcal Y$ is such that \begin{equation*}
\E( \tilde y - y |  \prx ) = \E( \tilde y - y ) \quad \text{ for all } \tilde y \in \mathcal Y \text { and } \prx \in \prX.
\end{equation*}
Then, the learned model will not suffer from disparate impact, i.e., 
$
\E( \tilde y |  \prx ) = \E( \tilde y )$ for all $\prx \in \prX.
$
\end{proposition}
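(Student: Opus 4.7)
The plan is to mirror the classification argument in Proposition~\ref{prop:relation_to_disparate_mistreatment}, but to exploit the fact that in the regression setting fairness is formulated only in terms of first moments. The key identity is the deterministic decomposition $\tilde y = (\tilde y - y) + y$, which by linearity of (conditional) expectation immediately splits both $\E(\tilde y \mid \prx)$ and $\E(\tilde y)$ into two pieces that can be matched using the two hypotheses of the proposition.

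Concretely, I would fix an arbitrary $\prx \in \prX$ and write
\begin{equation*}
\E(\tilde y \mid \prx) \;=\; \E(\tilde y - y \mid \prx) + \E(y \mid \prx),
\end{equation*}
which is valid by linearity of conditional expectation. The first term on the right equals $\E(\tilde y - y)$ by the hypothesis on the learner (unbiased errors across protected groups), and the second term equals $\E(y)$ by the hypothesis that the underlying process is free of disparate impact. Combining and applying linearity of the unconditional expectation in reverse then yields
\begin{equation*}
\E(\tilde y \mid \prx) \;=\; \E(\tilde y - y) + \E(y) \;=\; \E(\tilde y).
\end{equation*}
Since $\prx$ was arbitrary, this is the desired conclusion.

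There is no real obstacle: the entire argument is two applications of linearity of expectation chained by the two assumptions. The only subtlety worth flagging is that the manipulation implicitly assumes both $y$ and $\tilde y$ are integrable and that the conditional expectation given $\prx$ is well defined, which is automatic in the finite-support setting used throughout the paper. Note also that, unlike the classification case, one does not need a law-of-total-probability decomposition over $y \in \mathcal Y$, because here we only need to equate first moments; this is precisely the simplification anticipated in the remark following Definition~\ref{def:disparate_impact_regression}.
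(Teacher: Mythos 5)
Your proof is correct and is essentially identical to the paper's own (one-line) argument: both decompose $\E(\tilde y \mid \prx) = \E(\tilde y - y \mid \prx) + \E(y \mid \prx)$ and apply the two hypotheses term by term. Your version merely spells out the linearity steps and integrability caveat more explicitly.
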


\begin{proof}
$E(\hat y |  \prx) = \E( \hat y-y | \prx) + \E(y|\prx)$
$=E(\hat y- y) + \E(y) = \E(\hat y).$
\end{proof}

The following metric enables us to quantify disparate impact in a dataset with continuous or ordered discrete labels.
\begin{definition}[DIDI in Regression] Given a regression dataset $\mathcal D:= \{ {\bf x}_i, y_i \}_{i\in \mathcal N}$, we define its Disparate Impact Discrimination Index by
$$
{\mathsf{DIDI}}_{\rm r}({\mathcal D}) = \sum_{\prx \in \prX} \left| 
\frac{  \sum_{ i \in \mathcal N } y_i \mathcal I( \prxi = \prx) }{\sum_{ i \in \mathcal N } \mathcal I( \prxi = \prx) } 
-
\frac{1}{ | \mathcal N | }  \sum_{i \in \mathcal N} y_i
\right|
$$
where $\mathcal I (\cdot)$ evaluates to 1 (0) if its argument is true (false).
The higher ${\mathsf{DIDI}}_{\rm r}({\mathcal D})$, the more the dataset suffers from disparate impact. If ${\mathsf{DIDI}}_{\rm r}({\mathcal D}) = 0$, we will say that the dataset does not suffer from disparate impact.
\label{def:DIDIr}
\end{definition}


\subsection{Disparate Treatment}

As mentioned in Section~\ref{sec:introduction}, disparate treatment arises when a decision-making system provides different outputs for groups of people with the same (or similar) values of the non-sensitive features but different values of sensitive features. We formalize this notion mathematically.

\begin{definition}[Disparate Treatment in Classification] Consider a class based decision-making process that maps feature vectors ${\bf x} \in \mathbb R^d$ with associated protected (unprotected) parts $\prx \in \prX$ ($\uprx$) to labels $y \in \mathcal Y$. We will say that the decision-making process does not suffer from disparate treatment if the probability that it outputs a specific value $y$ given $\uprx$ does not change after observing the protected feature(s) $\prx$, i.e.,
$$\PR( y |  \uprx, \prx ) = \PR( y | \uprx ) \quad \text{ for all } y \in \mathcal Y \text { and } {\bf x} \in \mathcal X.$$
\label{def:DT_c}
\end{definition}
The following metric enables us to quantify disparate treatment in a dataset with categorical or unordered labels.
\begin{definition}[DTDI in Classification]
Given a classification dataset $\mathcal D:= \{ {\bf x}_i, y_i \}_{i\in \mathcal N}$, we define its Disparate Treatment Discrimination Index by
\begin{equation}
\begin{array}{l}
{\mathsf{DTDI}}_{\rm c}({\mathcal D})= 
\displaystyle \sum_{\begin{smallmatrix} y \in \mathcal Y ,\\ \prx \in \prX \\ j \in \mathcal N \end{smallmatrix}} 
\left| 
\frac{ \textstyle \sum_{i \in \mathcal N} d( \uprxi, \uprxj ) \mathcal I (y_i=y)  }
{ \textstyle  \sum_{i \in \mathcal N} d( \uprxi, \uprxj ) }
\right. \\ 
\left.\quad -
\frac{ \textstyle \sum_{i \in \mathcal N} d( \uprxi, \uprxj ) \mathcal I (y_i=y \cap \prxi = \prx )  }
{ \textstyle  \sum_{i \in \mathcal N} d( \uprxi, \uprxj ) \mathcal I (\prxi = \prx ) }  
\right|,
\end{array}
\label{eq:DTDIc}
\end{equation}
where $d(\uprxi, \uprxj)$ is any non-increasing function in the distance between $\uprxi$ and $\uprxj$ so that more weight is put on pairs that are close to one another. The idea of using a locally weighted average to estimate the conditional expectation is a well known technique in statistics referred to as Kernel Regression, see e.g., \cite{nadaraya1964estimating}. The higher ${\mathsf{DTDI}}_{\rm c}({\mathcal D})$, the more the dataset suffers from disparate treatment. If ${\mathsf{DTDI}}_{\rm c}({\mathcal D}) = 0$, the dataset does not suffer from disparate treatment.
\label{def:DTDIc}
\end{definition}

\begin{example}[$k$NN]
A natural choice for the weight function in~\eqref{eq:DTDIc} is
$$
d( \uprxi, \uprxj ) = \begin{cases}
1 & \text{if } \uprxi \text{ is a $k$-nearest neighbor of $\uprxj$} \\
0 & \text{else}.
\end{cases}
$$
\end{example}

Next, we propose a mathematical definition of disparate treatment in regression.
\begin{definition}[Disparate Treatment in Regression] Consider a decision-making process that maps feature vectors ${\bf x} \in \mathbb R^d$ with associated protected (unprotected) parts $\prx \in \prX$ ($\uprx$) to values $y \in \mathcal Y$. We will say that the decision-making process does not suffer from disparate treatment if
$$
\E( y |  \uprx, \prx ) = \E( y | \uprx ) \quad \text{ for all } {\bf x} \in \mathcal X.
$$
\end{definition}

The following metric enables us to quantify disparate treatment in a dataset with continuous or ordered discrete labels.
\begin{definition}[DTDI in Regression]
Given a classification dataset $\mathcal D:= \{ {\bf x}_i, y_i \}_{i\in \mathcal N}$, we define its Disparate Treatment Discrimination Index by
\begin{equation}
\begin{array}{ccl}
{\mathsf{DTDI}}_{\rm r}({\mathcal D}) = 
\displaystyle \sum_{\begin{smallmatrix} \prx \in \prX ,j \in \mathcal N \end{smallmatrix}} 
\left| 
\frac{ \textstyle  \sum_{i \in \mathcal N} d( \uprxi, \uprxj ) y_i  }
{ \textstyle  \sum_{i \in \mathcal N} d( \uprxi, \uprxj ) }
\right. \\
\left. \qquad \quad  -
\frac{ \textstyle \sum_{i \in \mathcal N} d( \uprxi, \uprxj ) \mathcal I (\prxi = \prx ) y_i  }
{ \textstyle  \sum_{i \in \mathcal N} d( \uprxi, \uprxj ) \mathcal I (\prxi = \prx ) } 
\right|,
\end{array}
\label{eq:DTDIr}
\end{equation}
where $d(\uprxi, \uprxj)$ is as in Definition~\ref{def:DTDIc}. If ${\mathsf{DTDI}}_{\rm r}({\mathcal D}) = 0$, we say that the data does not suffer from disparate treatment.
\label{def:DTDIr}
\end{definition}

\section{Mixed Integer Optimization Framework for Learning Fair Decision Trees}
\label{sec:MIP-formulation}

We propose a mixed-integer linear program (MILP)-based regularization approach for trading-off prediction quality and fairness in decision trees. 

\subsection{Overview}
\label{sec:overview}
Given a training dataset $\mathcal T:=\{ {\bf x}_i, y_i \}_{i\in \mathcal N}$, we let $\hat y_i$ denote the prediction associated with datapoint $i \in \mathcal N$ and define $\hat y:=\{\hat y_i\}_{i \in \mathcal N}$. We propose to design classification (resp.\ regression) trees that minimize a loss function $\ell_{\rm c}(\mathcal T, \hat y)$ (resp.\ $\ell_{\rm r}(\mathcal T, \hat y)$) augmented with a discrimination regularizer $\ell^{\rm d}_{\rm c}(\mathcal T,\hat y)$ (resp.\ $\ell^{\rm d}_{\rm r}(\mathcal T,\hat y)$ ). Thus, given a regulization weight $\lambda \geq 0$ that allows tuning of the fairness-accuracy trade-off, we seek to design decision trees that minimize
\begin{equation}
\ell_{\rm c/\rm r}(\mathcal T, \hat y) + \lambda \ell^{\rm d}_{\rm c/\rm r}(\mathcal T,\hat y),
\label{eq:loss_objective}
\end{equation}
where the $\rm c$ ($\rm r$) subscript refers to classification (regression).

A typical choice for the loss function in the case of classification tasks is the \emph{misclassification rate}, defined as the portion of incorrect predictions, i.e., $\ell_{\rm c}(\mathcal T, \hat y) := 1/| \mathcal N | \sum_{i \in \mathcal N} \mathcal I ( y_i \neq \hat y_i )$. In the case of regression tasks, a loss function often employed is the \emph{mean absolute error} defined as $\ell_{\rm r}({\mathcal T}, \hat y) := 1/| \mathcal N | \sum_{i \in \mathcal N} | \hat y_i - y_i |$. Both these loss functions are attractive as they give rise to linear models, see Section~\ref{sec:MILP-actual}. Accordingly, discrimination of the learned model is measured using a discrimination loss function taken to be any of the discrimination indices introduced in Section~\ref{sec:fairness}. For example, in the case of classification/regression tasks, we propose to either penalize disparate impact by defining the discrimination loss function as $\ell^{\rm d}_{\rm c/\rm r}({\mathcal T},\hat y):= \mathsf{DIDI}_{\rm c/\rm r}(\{ {\bf x}_i, \hat y_i \}_{i\in \mathcal N})$ or to penalize disparate treatment by defining the discrimination loss function as $\ell^{\rm d}_{\rm c/\rm r}({\mathcal T},\hat y):= \mathsf{DTDI}_{\rm c/\rm r}(\{ {\bf x}_i, \hat y_i \}_{i\in \mathcal N})$. As will become clear later on, discrimination loss functions combining disparate treatment and disparate impact are also acceptable. All of these give rise to linear models.

\subsection{General Classes of Decision-Trees}
\label{sec:decision-trees-general}

A decision-tree~\cite{breiman1984classification} takes the form of a tree-like structure consisting of nodes, branches, and leafs. In each internal node of the tree, a ``test'' is performed. Each branch represents the outcome of the test. Each leaf collects all points that gave the same answers to all tests. Thus, each path from root to leaf represents a classification rule that assigns each data point to a leaf. At each leaf, a prediction from the set $\mathcal Y$ is made for each data point -- in traditional decision trees, the same prediction is given to all data points that fall in the same leaf.

In this work, we propose to use integer optimization to design general classes of fair decision-trees. Thus, we introduce decision variables that decide on the branching structure of the tree and on the predictions at each leaf. We then seek optimal values of these variables to minimize the loss function~\eqref{eq:loss_objective}, see Section~\ref{sec:MILP-actual}. 

Next, we introduce various classes of decision trees that can be handled by our framework and which generalize the decision tree structures from the literature. We assume that the decision-maker has selected the depth $K$ of the tree. This assumption is in line with the literature on fair decision-trees, see~\cite{Kamiran:2010}. We let $\sets V$ and $\mathcal L$ denote the set of all branching nodes and leaf nodes in the tree, respectively. Denote by $\sets F_{\rm c}$ and $\sets F_{\rm q}$ the sets of all indices of categorical and quantitative features, respectively. Also, let $\sets F:=\sets F_{\rm c} \cup \sets F_{\rm q}$ (so that $|\mathcal F|=d$). 

We introduce the decision variables $p_{\nu j}$ which are zero if and only if the $j$th feature, $j \in \mathcal F$, is not involved in the branching rule at node $\nu \in \mathcal V$. We also let the binary decision variables $z_{il} \in \{0,1\}$ indicate if data point $i \in \mathcal N$ belongs to leaf $l \in \mathcal L$. Finally, we let $\hat y_i \in \mathcal Y$ decide on the prediction for data point $i \in \mathcal N$. We denote by $\mathcal P$ and $\hat {\mathcal Y}(z)$ the sets of all feasible values for $p$ and $\hat y$, respectively.

\begin{example}[Classical Decision-Trees] 
In classical decision-trees, the test that is performed at each internal node involves a \emph{single feature} (e.g., if the age of an individual is less than 18). Thus,
$$
\mathcal P = \left\{ p \in \{0,1\}^{| \mathcal V| \times | \mathcal X|} : \textstyle \sum_{j \in \sets F} p_{\nu j}= 1 \quad \forall \nu \in \mathcal V  \right\}
$$
and $p_{\nu j}=1$ if and only if we branch on feature $j$ at node~$\nu$. Additionally, all data points that reach the same leaf are assigned the \emph{same prediction}. Thus,
$$
\hat {\mathcal Y}(z) =  \left\{ \hat y \in \mathbb R^{| \mathcal N |} : \exists u \in \mathcal Y^{|\mathcal L|} \text{ with } \textstyle \hat y_i = \displaystyle \sum_{l \in \mathcal L} z_{il} u_l \; \forall i  \right\}.
$$
The auxiliary decision variables $u_l$ denote the prediction for leaf $l \in \mathcal L$.
\label{ex:PWC_policies}
\end{example}


\begin{example}[Decision-Trees enhanced with Linear Branching]
A generalization of the decision-trees from Example \ref{ex:PWC_policies} can be obtained by allowing the ``test'' to involve a linear function of several features. In this setting, we view all features as being quantitative (i.e., continuous or discrete and ordered) so that $\mathcal F_{\rm c}=\emptyset$ and let
$$
\mathcal P = \left\{ p \in \mathbb R^{| \mathcal V| \times | \mathcal X|} : \textstyle \sum_{j \in \sets F} p_{\nu j}= 1 \quad \forall \nu \in \mathcal V  \right\}.
$$
As before, all data points that reach the same leaf are assigned the \emph{same prediction} so that
$
\hat{\mathcal Y}(z)
$
is defined as in Example~\ref{ex:PWC_policies}.
\label{ex:PWC_policies_GP}
\end{example}


\begin{example}[Decision-Trees enhanced with Linear Leafing]
\label{ex:PWL_policies}
Another variant of the decision-trees from Example \ref{ex:PWC_policies} is one where, rather than having a common prediction for all data points that reach a leaf, a linear scoring rule is employed at each leaf. Thus,
$$
\hat {\mathcal Y}(z) = \left\{
\begin{array}{ll}
 \hat y \in \mathcal Y^{| \mathcal N |} : &
 \exists u_l \in \mathbb R^{d}, \; l\in \mathcal L \text{ with } \\
 &  \hat y_i = \textstyle \sum_{l \in \mathcal L} z_{il} u_l^\top {\bf x}_i \quad \forall i \in \mathcal N
\end{array}
\right\}.
$$
The auxiliary decision variables $u_l$ collect the coefficients of the linear rules at each leaf $l\in \mathcal L$. 
\end{example}

In addition to the examples above, one may naturally also consider decision-trees enhanced with both linear branching and linear leafing. 

We note that all sets above are MILP representable. Indeed, they involve products of binary and real-valued decision variables which can be easily linearized using standard techniques. The classes of decision trees above were originally proposed in~\cite{Azizi2018_CPAIOR} in the context of policy design for resource allocation problems. Our work generalizes them to generic decision- and policy-making tasks.

\subsection{MILP Formulation}
\label{sec:MILP-actual}

 For $\nu \in \sets V$, let $\sets L^{\rm r}(\nu)$ (resp.\ $\sets L^{\rm l}(\nu)$) denote all the leaf nodes that lie to the right (resp.\ left) of node $\nu$. Denote with ${\bf x}_{i,j}$ the value attained by the $j$th feature of the $i$th data point and for $j \in \sets F_{\rm c}$, let $\sets X_j$ collect the possible levels attainable by feature~$j$. Consider the following MIP
\begin{subequations}
\begin{align}
&\text{minimize} \;\;  \ell_{\rm c/\rm r}(\mathcal T, \hat y) + \lambda \ell^{\rm d}_{\rm c/\rm r}(\mathcal T,\hat y) \label{eq:mip_a}\\
&\text{subject to} \;\; p \in \mathcal P, \; \hat y \in \hat{\mathcal Y}(z) \label{eq:mip_b} \\
&  \textstyle q_\nu - \sum_{j \in \sets F_{{\rm q}}} p_{\nu j} {\bf x}_{i,j}= g_{i \nu}^+ - g_{i \nu}^- \quad \qquad \qquad \,\, \forall \nu , \; i 
\label{eq:mip_c} \\
&  \textstyle g_{i \nu}^+ \leq M w_{i \nu}^{\rm q} \; \, \qquad \qquad \qquad \quad \qquad \qquad \qquad \forall \nu, \;  i  
\label{eq:mip_d} \\
&  \textstyle g_{i \nu}^- \leq M (1-w_{i \nu}^{\rm q} )  \qquad \qquad \qquad  \qquad   \quad  \quad \quad \forall \nu, \; i
\label{eq:mip_e}\\ 
&  \textstyle g_{i \nu}^+ + g_{i\nu}^- \geq \epsilon (1-w_{i \nu}^{\rm q}) \quad \qquad \qquad \qquad \;  \quad \quad \forall \nu, \; i 
\label{eq:mip_f}\\
&  \textstyle z_{il} \leq  1-w_{i\nu}^{\rm q}+(1-\sum_{j \in \sets F_{\rm q}} p_{\nu j})  \quad \forall \nu,i,l \in {\sets L}^{\rm r}(\nu) 
\label{eq:mip_g}\\ 
&  \textstyle z_{il} \leq  w_{i\nu}^{\rm q}+ (1-\sum_{j \in \sets F_{\rm q}} p_{\nu j}) \quad \quad \; \forall \nu, \; i, \; l \in {\sets L}^{\rm l}(\nu) 
\label{eq:mip_h} \\
& s_{\nu j k} \leq p_{\nu j} \qquad  \qquad \qquad \qquad \; \forall \nu, \; j \in \sets F_{\rm c}, \; k \in \sets X_j 
\label{eq:mip_i} \\
& w_{i\nu}^{\rm c} = \textstyle \sum_{j\in \sets F_{\rm c}} \sum_{k \in \sets X_j} s_{\nu j k}\I{{\bf x}_{i,j}=k} \quad  \quad \; \; \forall \nu, \;  i
\label{eq:mip_j}\\
&  \textstyle z_{il} \leq  w^{\rm c}_{i \nu} + (1-\sum_{j \in \sets F_{\rm c}} p_{\nu j}) \quad \quad \; \forall \nu, \; i, \; l \in {\sets L}^{\rm l}(\nu)
\label{eq:mip_k}\\
&  \textstyle z_{il} \leq 1 - w^{\rm c}_{i \nu} + (1-\sum_{j \in \sets F_{\rm c}} p_{\nu j}) \quad \forall \nu,i, l \in {\sets L}^{\rm r}(\nu) 
\label{eq:mip_l}\\
& \textstyle \sum_{l \in {\sets L}} z_{il} = 1 \qquad  \qquad \qquad  \qquad \qquad   \qquad \qquad \forall i 
\label{eq:mip_m}
\end{align}
\label{eq:mip}
\end{subequations}
\noindent with variables $p$ and $\hat y$; $q_{\nu}, \; g_{i\nu}^+, \; g_{i\nu}^-\in \reals$; and $z_{il}, \; w^{\rm q}_{i \nu}, \; w^{\rm c}_{i \nu}, \; s_{\nu j k}  \in  \{0,1\}$ for all $i \in \sets N$, $l \in {\sets L}$, $\nu \in \sets V$, $j \in \sets F$, $k\in \sets X_j$, $l \in \mathcal L$.

An interpretation of the variables other than $z$, $p$, and $\hat y$ (which we introduced in Section~\ref{sec:decision-trees-general}) is as follows. The variables $q_{\nu}$,  $g_{i\nu}^+$, $g_{i\nu}^-$, and $w^{\rm q}_{i \nu}$ are used to bound $z_{il}$ based on the branching decisions at each node $\nu$, whenever branching is performed on a quantitative feature at that node. The variable $q_\nu$ corresponds to the cut-off value at node $\nu$. The variables $g_{i\nu}^+$ and $g_{i\nu}^-$ represent the positive and negative parts of $q_\nu - \sum_{j \in \sets F_{{\rm q}}} p_{\nu j} {\bf x}_{i,j}$, respectively. Whenever branching occurs on a quantitative (i.e., continuous or discrete and ordered) feature, the variable $w_{i\nu}^{\rm q}$ will equal 1 if and only if $q_\nu \geq \sum_{j \in \sets F_{{\rm q}}} p_{\nu j} {\bf x}_{i,j}$, in which case the $i$th data point must go left in the branch. The variables $w^{\rm c}_{i \nu}$ and $s_{\nu j k}$ are used to bound $z_{il}$ based on the branching decisions at each node $\nu$, whenever branching is performed on a \emph{categorical} feature at that node. Whenever we branch on categorical feature $j \in \sets F_{\rm c}$ at node $\nu$, the variable $s_{\nu j k}$ equals 1 if and only if the points such that ${\bf x}_{i,j}=k$ must go left in the branch. If we do not branch on feature $j$, then $s_{\nu j k}$ will equal zero. Finally, the variable $w_{i\nu}^{\rm c}$ equals 1 if and only if we branch on a categorical feature at node $\nu$ and data point $i$ must go left at the node.

An interpretation of the constraints is as follows. Constraints~\eqref{eq:mip_b} impose the adequate structure for the decision tree, see Examples~\ref{ex:PWC_policies}-\ref{ex:PWL_policies}. Constraints~\eqref{eq:mip_c}-\eqref{eq:mip_h} are used to bound $z_{il}$ based on the branching decisions at each node $\nu$, whenever branching is performed on a quantitative attribute at that node. Constraints~\eqref{eq:mip_c}-\eqref{eq:mip_f} are used to define $w_{i\nu}^{\rm q}$ to equal 1 if and only if $q_\nu \geq \sum_{j \in \sets F_{{\rm q}}} p_{\nu j} {\bf x}_{i,j}$. Constraint~\eqref{eq:mip_g} stipulates that if we branch on a quantitative attribute at node $\nu$ and the $i$th record goes left at the node (i.e., $w_{i\nu}^{\rm q}=1$), then that record cannot reach any leaf node that lies to the right of the node. Constraint~\eqref{eq:mip_h} is symmetric to~\eqref{eq:mip_g} for the case when the data point goes right at the node. Constraints~\eqref{eq:mip_i}-\eqref{eq:mip_l} are used to bound $z_{il}$ based on the branching decisions at each node $\nu$, whenever branching is performed on a \emph{categorical} attribute at that node. Constraint~\eqref{eq:mip_i} stipulates that if we do not branch on attribute $j$ at node $\nu$, then $s_{\nu j k}=0$. Constraint~\eqref{eq:mip_j} is used to define $w_{i\nu}^{\rm c}$ such that it is equal to 1 if and only if we branch on a particular attribute $j$, the value attained by that attribute in the $i$th record is $k$ and data points with attribute value $k$ are assigned to the left branch of the node. Constraints~\eqref{eq:mip_k} and~\eqref{eq:mip_l} mirror constraints~\eqref{eq:mip_g} and~\eqref{eq:mip_h}, for the case of categorical attributes.

With the loss function taken as the misclassification rate or the mean absolute error and the discrimination loss function taken as one of the indices from Section~\ref{sec:fairness}, Problem~\eqref{eq:mip} is a MIP involving a convex piecewise linear objective and linear constraints. It can be linearized using standard techniques and be written equivalently as an MILP. The number of decision variables (resp.\ constraints) in the problem is $\mathcal O(|\mathcal V| | \mathcal F| \max_j |\mathcal X_j| + |\mathcal N| |\mathcal V| )$ (resp.\ $\mathcal O(|\mathcal V|^2|\mathcal N| + |\mathcal V| |\mathcal F| \max_j |\mathcal X_j|)$, i.e., polynomial in the size of the dataset.

\begin{remark}
Our approach of penalizing unfairness using a regularizer can be applied to existing MIP models for learning optimal trees such as the ones in \cite{verwer2017,Bertsimas2017}. Contrary to these papers which require one-hot encoding of categorical features, our approach yields more interpretable and flexible trees.
\end{remark}

\noindent\textbf{Customizing Interpretability.} An appealing feature of our framework is that it can cater for interpretability requirements. First, we can limit the value of $K$. Second, we can augment our formulation through the addition of linear interpretability constraints. For example, we can conveniently limit the number of times that a particular feature is employed in a test by imposing an upper bound on $\sum_{\nu \in \mathcal V} p_{vj}$. We can also easily limit the number of features employed in branching rules.

\begin{remark}
Preference elicitation techniques can be used to make a suitable choice for $\lambda$ and to learn the relative priorities of decision-makers in terms of the three conflicting objectives of predictive power, fairness, and interpretability.
\end{remark}

\section{Numerical Results}

\textbf{Classification.} We evaluate our approach on 3 datasets: (A) The \texttt{Default} dataset of Taiwanese credit card users \cite{UCI:2017,Yeh:2009} with $|\mathcal N|=30,000$ and $d=23$ features, where we predict whether individuals will default and the protected attribute is gender; (B) The \texttt{Adult} dataset \cite{UCI:2017,Kohavi:1996} with $|\mathcal N|=45,000$, $d=13$, where we predict if an individual earns more than \$50k per year and the protected attribute is race; (C) The \texttt{COMPAS} dataset \cite{Machine_Bias,Corbett:2017} with $|\mathcal N|=10,500$ data points and $d=16$, where we predict if a convicted individual will commit a violent crime and the protected attribute is race. These datasets are standard in the literature on fair ML, so useful for benchmarking. We compare our approach (\texttt{MIP-DT}) to 3 other families: \emph{i)} The MIP approach to classification where $\lambda=0$ (\texttt{CART}); \emph{ii)} the discrimination-aware decision tree approach (\texttt{DADT}) of \cite{Kamiran:2010} with information gain w.r.t.\ the protected attribute (\texttt{IGC+IGS}) and with relabeling algorithm (\texttt{IGC+IGS Relab}); \emph{iii)} The fair \texttt{logistic} regression methods of \cite{Berk:2017} (\texttt{log}, \texttt{log-ind}, and \texttt{log-grp} for regular logistic regression, logistic regression with individual fairness, and group fairness penalty functions, respectively). Finally, we also discuss the performance of an \underline{A}pproximate variant of our approach (\texttt{MIP-DT-A}) in which we assume that individuals that have similar outcomes are similar and replace the distance between features in~\eqref{eq:DTDIc} by the distance between outcomes, as is always done in the literature~\cite{Berk:2017}. As we will see, this approximation results in loss in performance. In all approaches, we conduct a pre-processing step in which we eliminate the protected features from the learning phase. We do not compare to uninterpretable fairness in-processing approaches since we could not find any such approach. 

\begin{figure*}[t!]
        \centering
        \includegraphics[width = \textwidth]{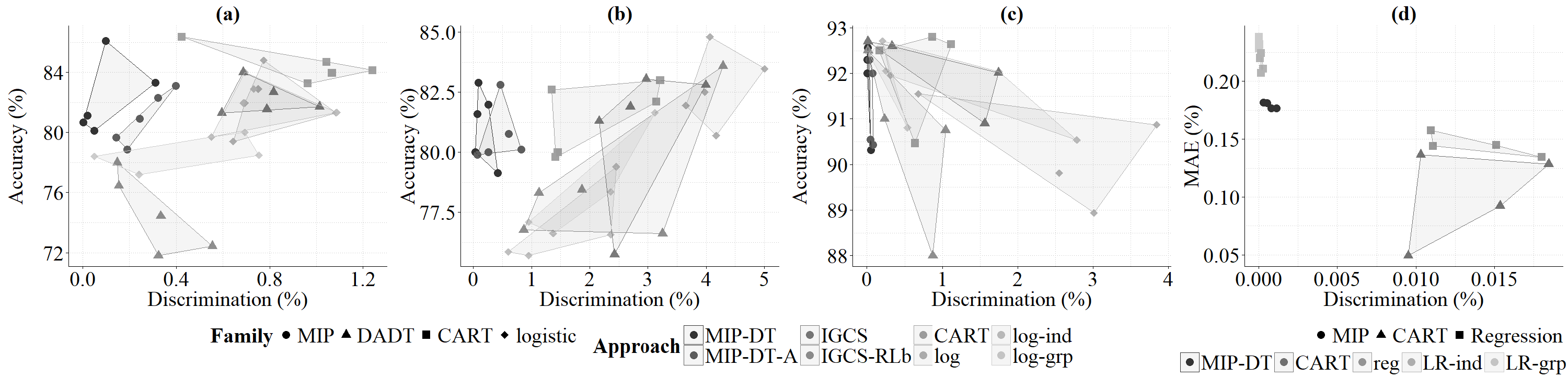}
    \caption{Accuracy-discrimination trade-off of 4 families of approaches
    on 3 classification datasets: (a) \texttt{Default}, (b) \texttt{Adult}, and (c) \texttt{COMPAS}. Each dot represents a different sample from 5-fold cross-validation and each shaded area corresponds to the convex hull of the results associated with each approach in accuracy-discrimination space. Same trade-off of 3 families of approaches 
    on the regression dataset \texttt{Crime} is shown in (d).}
\label{fig:results}
\end{figure*}

\begin{figure*}
\includegraphics[width=0.185\textwidth]{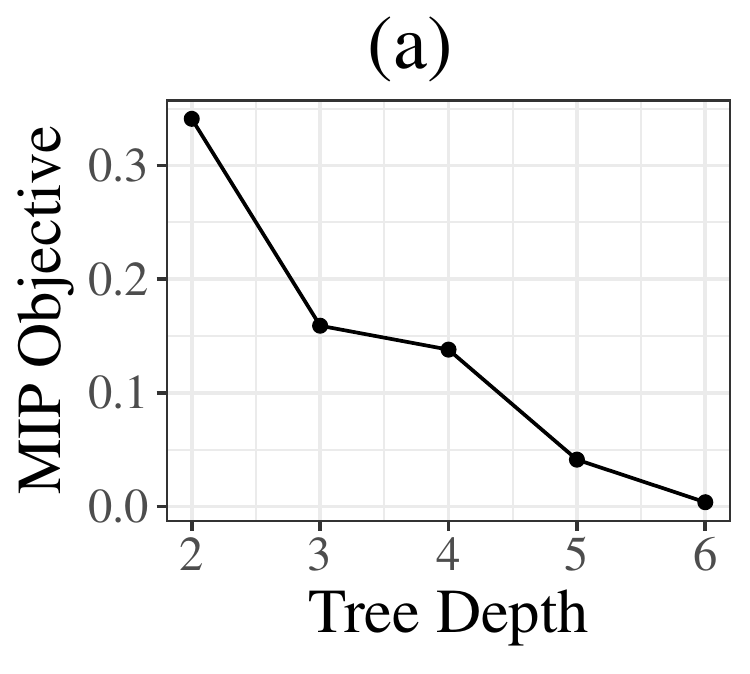}
\includegraphics[width=0.27\textwidth]{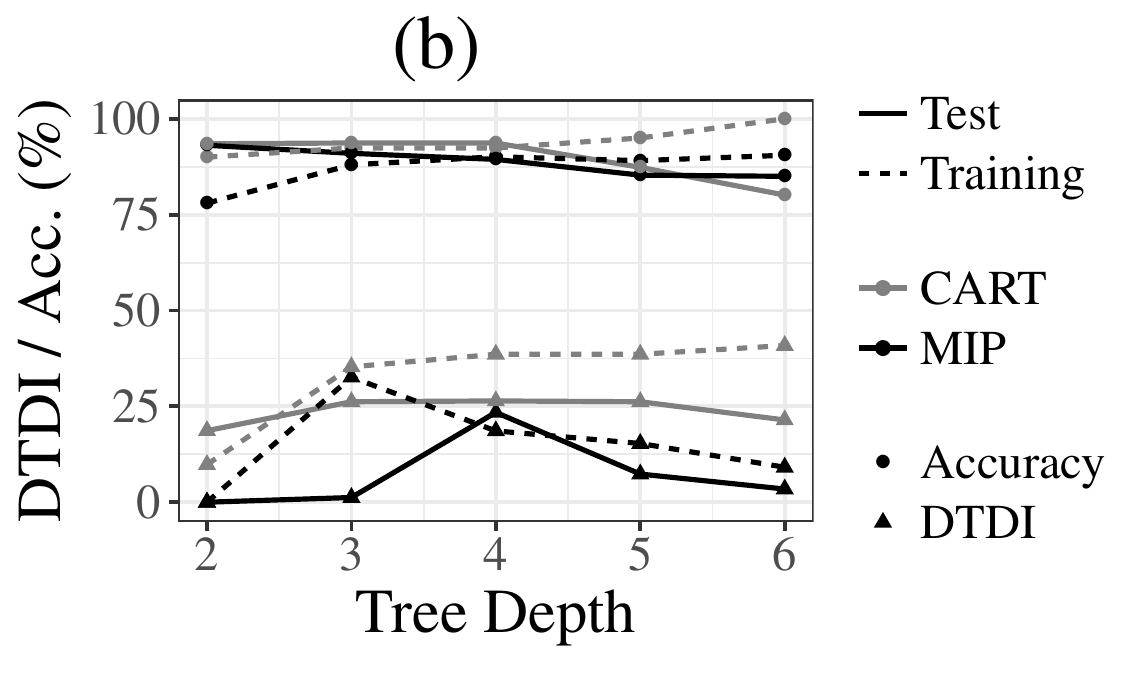}
\includegraphics[width=0.285\textwidth]{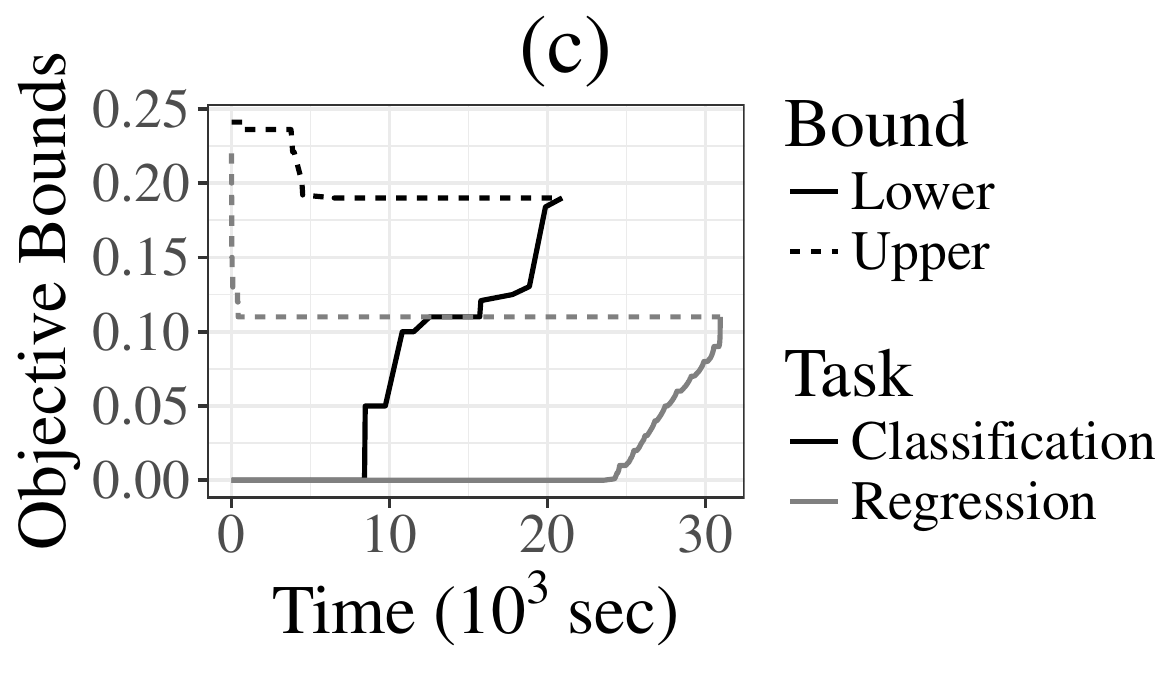}
\includegraphics[width=0.245\textwidth]{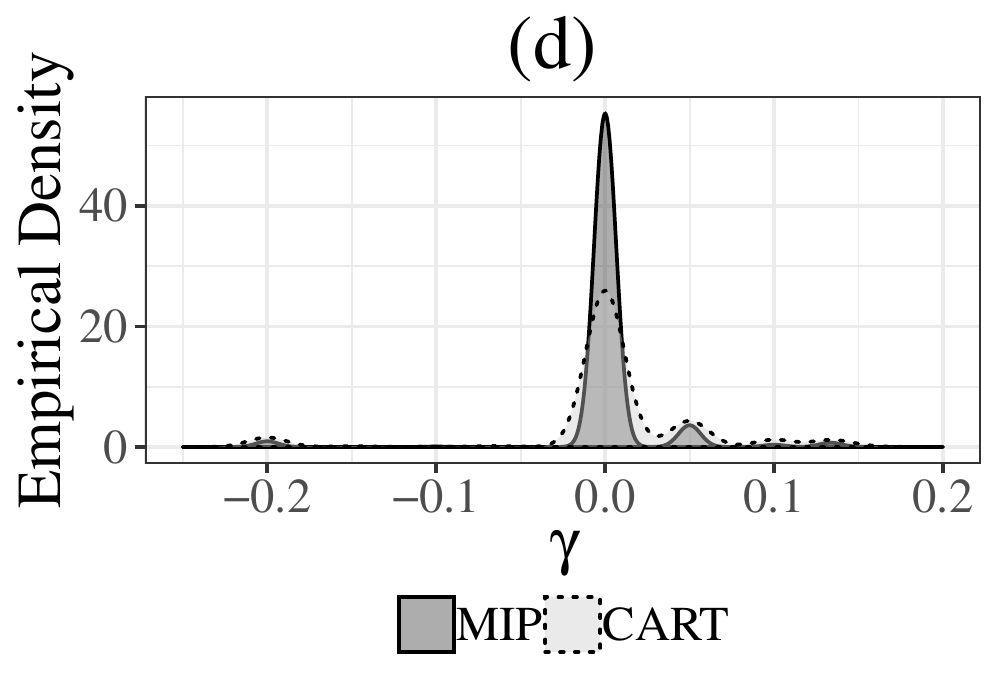}
\caption{From left to right: (a) MIP objective value and (b) Accuracy and fairness in dependence of tree depth; (c) Comparison of upper and lower bound evolution while solving MILP problem; and (d) Empirical distribution of $\gamma(\textbf{x}):= \PR( y |  \uprx, \prx ) - \PR( y | \uprx )$ (see Definition \ref{def:DT_c}) when ${\textbf{x}}$ is valued in the test set in both CART ($\lambda=0$) and MIP. }
\label{fig:intepretability_complexity_fairness}
\end{figure*}

\begin{figure}[t!]
    \centering
        \includegraphics[width=\columnwidth]{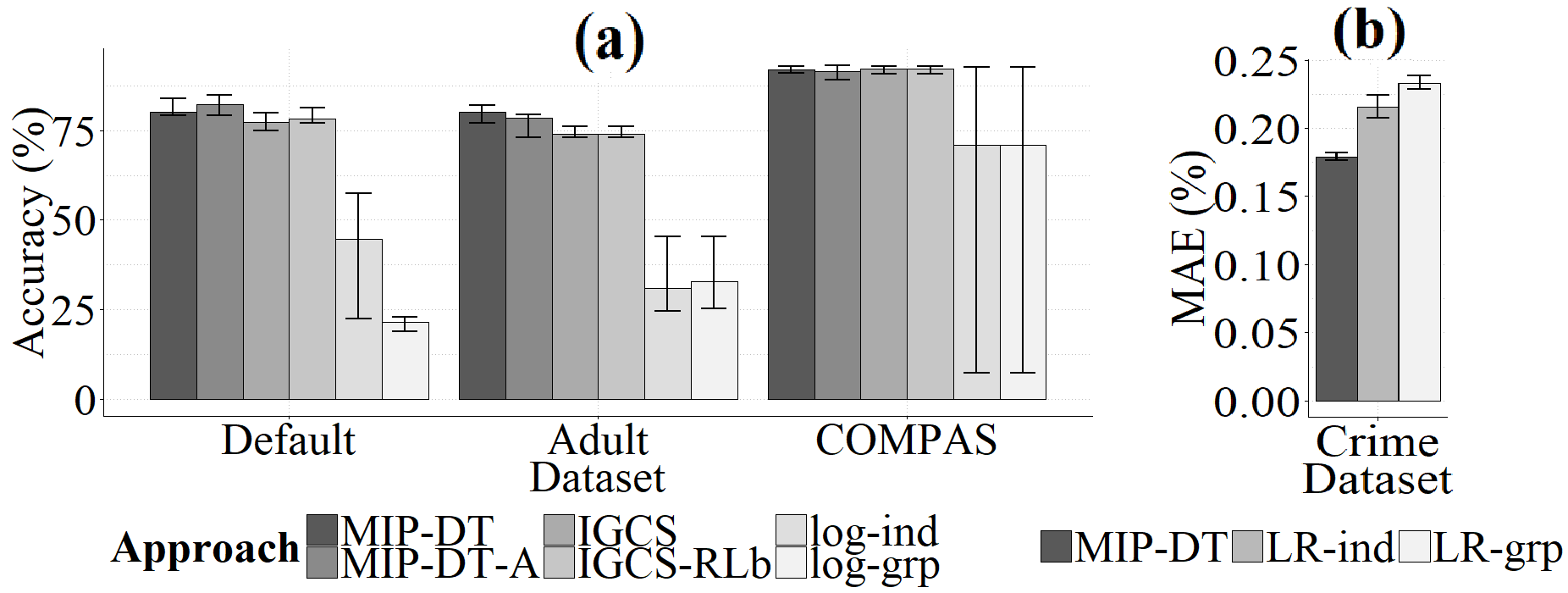} 
        \caption{Accuracy of maximally non-discriminative models in each approach for (a) classification and (b) regression.}
		\label{fig:accuracy_results}
\end{figure}

\noindent\textbf{Regression.} We evaluate our approach on the \texttt{Crime} dataset \cite{UCI:2017,redmond2002data} with $|\mathcal N|=1993$ and $d=128$. We add a binary column called ``race'' which is labeled 1 iff the majority of a community is black and 0 otherwise and we predict violent crime rates using race as the protected attribute. We use the ``repeatedcv'' method in R to select the 11 most important features. We compare our approach (\texttt{MIP-DT} and \texttt{MIP-DT-A}, where \texttt{A} stands for \underline{A}pproximate distance function) to 2 other families: \emph{i)} The MIP regression tree approach where $\lambda=0$ (\texttt{CART}); \emph{ii)} The linear \texttt{regression} methods in \cite{Berk:2017} (marked as \texttt{reg}, \texttt{LR-ind}, and \texttt{LR-grp} for regular linear regression, linear regression with individual fairness, and group fairness penalty functions).

\noindent\textbf{Fairness and Accuracy.} In all our experiments, we use ${\mathsf{DTDI_{\rm c/\rm r}}}$ as the discrimination index. First, we investigate the fairness/accuracy trade-off of all methods by evaluating the performance of the most accurate models with low discrimination. We do $k$-fold cross validation where for classification (regression) $k$ is 5(4). For each (fold, approach) pair, we select the optimal $\lambda$ (call it $\lambda^\star$) in the objective~\eqref{eq:loss_objective} as follows: for each $\lambda$ in $\{0,0.1,0.2,\hdots\}$, we compute the tree on the fold using the given approach and determine the associated discrimination level on the fold; we stop when the discrimination level is $<0.01\%$ and return $\lambda$ as $\lambda^\star$; we then evaluate accuracy (misclassification rate/MAE) and discrimination of the classification/regression tree associated with $\lambda^\star$ on the test set and add this as a point in the corresponding graph in Figure~\ref{fig:results}. For classification (regression), each fold is 1000 to 5000 (400) samples. Figures \ref{fig:results}(a)-(c) (resp.\ (d)) show the fairness-accuracy results for classification (resp.\ regression) datasets. On average, our approach yields results with discrimination closer to zero but also higher accuracy. Accuracy results for the most accurate models with zero discrimination (when available) are shown in Figure~\ref{fig:accuracy_results}. From Figure~\ref{fig:accuracy_results}(a), it can be seen that our approach is more accurate than the fair \texttt{log} approach and has slightly higher accuracy compared to \texttt{DADT}. These improved results come at computational cost: the average solver times for our approach in the 3 classification datasets are\footnote{We modeled the MIP using JuMP in Julia \cite{DunningHuchetteLubin2017} and solved it using Gurobi 7.5.2 on a computer node with 20 CPUs and 64 GB of RAM. We imposed a 5 (10) hour solve time limit for classification (regression).} 18421.43s, 15944.94s and 18161.64s, respectively. The \texttt{log} (resp. \texttt{IGC+IGB}) takes 18.43s, 16.04s, and 7.59s (65.68s, 23.39s, 4.78s).
 Figure \ref{fig:accuracy_results}(b) shows the MAE for each approach for zero discrimination. \texttt{MIP-DT} has far lower error than \texttt{LR-ind/grp}. The average solve time for \texttt{MIP-DT} (resp.\ \texttt{LR-ind/grp}) was 36007 (0.38/0.33) secs. 

\noindent\textbf{Fairness and Interpretability.}  Figures \ref{fig:intepretability_complexity_fairness}(a)-(b) show how the MIP objective and the accuracy and fairness values change in dependence of tree depth (a proxy for interpretability) on a fold from the \texttt{Adult} dataset. Such graphs can help non-technical decision-makers understand the trade-offs between fairness, accuracy, and interpretability. Figure \ref{fig:intepretability_complexity_fairness}(d) shows that the likelihood for individuals (that only differ in their protected characteristics, being otherwise similar) to be treated in the same way is twice as high in \texttt{MIP} than in \texttt{CART} on the same dataset: this is in line with our metric -- in this experiment, DTDI was 0.32\% (0.7\%) for \texttt{MIP} (\texttt{CART}).


\noindent\textbf{Solution Times Discussion.} As seen, our approaches exhibit better performance but higher \emph{training} computational cost. We emphasize that \emph{training} decision-support systems for socially sensitive tasks is usually \emph{not} time sensitive. At the same time, \emph{predicting} the outcome of a new (unseen) sample with our approach, which \emph{is} time-sensitive, is \emph{extremely fast} (in the order of milliseconds). In addition, as seen in Figure~\ref{fig:intepretability_complexity_fairness}(c), a near optimal solution is typically found very rapidly (these are results from a fold from the \texttt{Adult} dataset).

\section*{ Acknowledgments}
The authors gratefully acknowledge support from Schmidt Futures and from the James H. Zumberge Faculty Research and Innovation Fund at the University of Southern California. They thank the 6 anonymous referees whose reviews helped substantially improve the quality of the paper.

\fontsize{9.3pt}{10.3pt} \selectfont

\bibliographystyle{aaai}

\end{document}